\definecolor{bluecite}{HTML}{0875b7}
\algnewcommand{\Inputs}[1]{%
	\State \textbf{input:}
	\parbox[t]{.8\linewidth}{\raggedright #1}
}
\algnewcommand{\Initialize}[1]{%
	\State \textbf{initialization}
	\parbox[t]{.95\linewidth}{\raggedright #1}
}
\algnewcommand{\Outputs}[1]{%
	\State \textbf{output:}
	\parbox[t]{.8\linewidth}{\raggedright #1}
}
\newtheorem{lemma}{Lemma}
\newtheorem{remark}{Remark} 
\newcommand{\dv}{\mathbf} 
\newcommand{\mc}{\mathcal} 
\newcommand{\mkv}{-\!\!\!\!\minuso\!\!\!\!-}
\newcommand{\E}{\mathbb{E}}
\DeclareMathOperator*{\argmin}{min}
\DeclareMathOperator*{\argmax}{max}
\newcommand*{\qedblack}{\hfill\ensuremath{\blacksquare}}
\begin{document}
\fontencoding{OT1}\fontsize{10}{11}\selectfont
	
\title{Variational Information Bottleneck \\for Unsupervised Clustering: \\Deep Gaussian Mixture Embedding}
	
\author{
\vspace{0.3cm}
Yi{\u{g}}it U{\u{g}}ur $^{\dagger}$$^{\ddagger}$ \qquad \quad George Arvanitakis $^{\dagger}$ \qquad \quad Abdellatif Zaidi $^{\ddagger}$ \vspace{0.3cm} \\   
{\small
$^{\dagger}$ Mathematical and Algorithmic Sciences Lab, Paris Research Center, Huawei Technologies,\\ Boulogne-Billancourt, 92100, France\\
$^{\ddagger}$ Laboratoire d'informatique Gaspard-Monge, Universit\'e Paris-Est, Champs-sur-Marne, 77454, France\\
\vspace{0.1cm}
\{\tt ygtugur@gmail.com, george.arvanitakis@huawei.com, abdellatif.zaidi@u-pem.fr\} 
}
}

\markboth{A\MakeLowercase{ccepted for publication in} E\MakeLowercase{ntropy}, S\MakeLowercase{pecial} I\MakeLowercase{ssue} \MakeLowercase{on} I\MakeLowercase{nformation} T\MakeLowercase{heory} \MakeLowercase{for} D\MakeLowercase{ata} C\MakeLowercase{ommunications} \MakeLowercase{and} P\MakeLowercase{rocessing}, 2020}%
{ }
	
\maketitle  

\vspace{-1cm}
\begin{abstract}
In this paper, we develop an unsupervised generative clustering framework that combines the Variational Information Bottleneck and the Gaussian Mixture Model. Specifically, in our approach, we use the Variational Information Bottleneck method and model the latent space as a mixture of Gaussians. We derive a bound on the cost function of our model that generalizes the Evidence Lower Bound (ELBO) and provide a variational inference type algorithm that allows computing it. In the algorithm, the coders' mappings are parametrized using neural networks, and the bound is approximated by Monte Carlo sampling and optimized with stochastic gradient descent. Numerical results on real datasets are provided to support the efficiency of our method.
\end{abstract}

\section{Introduction}
\label{ch-introduction}

Clustering consists of partitioning a given dataset into various groups (clusters) based on some similarity metric, such as the Euclidean distance, $L_1$ norm, $L_2$ norm, $L_{\infty}$ norm, the popular logarithmic loss measure, or others. The principle is that each cluster should contain elements of the data that are closer to each other than to any other element outside that cluster, in the sense of the defined similarity measure. If the joint distribution of the clusters and data is not known, one should operate blindly in doing so, i.e., using only the data elements at hand; and the approach is called unsupervised clustering~\cite{S10, H98}. Unsupervised clustering is perhaps one of the most important tasks of unsupervised machine learning algorithms currently, due to a variety of application needs and connections with other problems. 

Clustering can be formulated as follows. Consider a dataset that is composed of $N$ samples $\{\dv x_i\}_{i=1}^N$, which we wish to partition into $|\mathcal{C}| \geq 1$ clusters. Let $\mathcal{C} = \{1,\dots, |\mathcal{C}|\}$ be the set of all possible clusters and $C$ designate a categorical random variable that lies in $\mathcal{C}$ and stands for the index of the actual cluster. If $\dv X$ is a random variable that models elements of the dataset, given that $\dv X=\dv x_i$ induces a probability distribution on $C$, which the learner should learn, thus mathematically, the problem is that of estimating the values of the unknown conditional probability $P_{C|\mathbf{X}}(\cdot | \dv x_i)$ for all elements $\dv x_i$ of the dataset. The estimates are sometimes referred to as the assignment probabilities.

Examples of unsupervised clustering algorithms include the very popular $K$-means~\cite{HW79} and Expectation Maximization (EM)~\cite{DLR77}. The $K$-means algorithm partitions the data in a manner that the Euclidean distance among the members of each cluster is minimized. With the EM algorithm, the underlying assumption is that the data comprise a mixture of Gaussian samples, namely a Gaussian Mixture Model (GMM); and one estimates the parameters of each component of the GMM while simultaneously associating each data sample with one of those components. Although they offer some advantages in the context of clustering, these algorithms suffer from some strong limitations. For example, it is well known that the $K$-means is highly sensitive to both the order of the data and scaling; and the obtained accuracy depends strongly on the initial seeds (in addition to that, it does not predict the number of clusters or $K$-value). The EM algorithm suffers mainly from slow convergence, especially for high-dimensional data. 

Recently, a new approach has emerged that seeks to perform inference on a transformed domain (generally referred to as latent space), not the data itself. The rationale is that because the latent space often has fewer dimensions, it is more convenient computationally to perform inference (clustering) on it rather than on the high-dimensional data directly. A key aspect then is how to design a latent space that is amenable to accurate low-complexity unsupervised clustering, i.e., one that preserves only those features of the observed high-dimensional data that are useful for clustering while removing all redundant or non-relevant information. Along this line of work, we can mention~\cite{DH04}, which utilized Principal Component Analysis (PCA)~\cite{P01,WEG87} for dimensionality reduction followed by $K$-means for clustering the obtained reduced dimension data; or~\cite{R98}, which used a combination of PCA and the EM algorithm. Other works that used alternatives for the linear PCA include kernel PCA~\cite{HSS08}, which employs PCA in a non-linear fashion to maximize variance in the data.

Tishby's Information Bottleneck (IB) method~\cite{IB-TPB99} formulates the problem of finding a good representation $\dv U$ that strikes the right balance between capturing all information about the categorical variable $C$ that is contained in the observation $\dv X$ and using the most concise representation for it. The IB problem can be written as the following Lagrangian optimization
\begin{equation}
\argmin_{P_{\dv U|\dv X}} \; I(\dv X; \dv U) - s I(C; \dv U) \;,
\end{equation}
where $I(\cdot \: ; \: \cdot)$ denotes Shannon's mutual information and $s$ is a Lagrange-type parameter, which controls the trade-off between accuracy and regularization. In~\cite{ST00, S02}, a text clustering algorithm is introduced for the case in which the joint probability distribution of the input data is known. This text clustering algorithm uses the IB method with an annealing procedure, where the parameter $s$ is increased gradually. When $s \rightarrow 0$, the representation $\dv U$ is designed with the most compact form, i.e., $|\mc U| = 1$, which corresponds to the maximum compression. By gradually increasing the parameter $s$, the emphasis on the relevance term $I(C; \dv U)$ increases, and at a critical value of $s$, the optimization focuses on not only the compression, but also the relevance term. To fulfill the demand on the relevance term, this results in the cardinality of $\dv U$ bifurcating. This is referred as a {phase transition} of the system. The further increases in the value of $s$ will cause other phase transitions, hence additional splits of $|\mc U|$ until it reaches the desired level, e.g., $|\mc U| = |\mc C|$.

However, in the real-world applications of clustering with large-scale datasets, the joint probability distributions of the datasets are unknown. In practice, the usage of Deep Neural Networks (DNN) for unsupervised clustering of high-dimensional data on a lower dimensional latent space has attracted considerable attention, especially with the advent of Autoencoder (AE) learning and the development of powerful tools to train them using standard backpropagation techniques~\cite{KW14,RMW14}. Advanced forms include Variational Autoencoders (VAE)~\cite{KW14,RMW14}, which are generative variants of AE that regularize the structure of the latent space, and the more general Variational Information Bottleneck (VIB) of~\cite{AFDM17}, which is a technique that is based on the Information Bottleneck method and seeks a better trade-off between accuracy and regularization than VAE via the introduction of a Lagrange-type parameter $s$, which controls that trade-off and whose optimization is similar to deterministic annealing~\cite{S02} or stochastic relaxation.

In this paper, we develop an unsupervised generative clustering framework that combines VIB and the Gaussian Mixture Model. Specifically, in our approach, we use the Variational Information Bottleneck method and model the latent space as a mixture of Gaussians. We derive a bound on the cost function of our model that generalizes the Evidence Lower Bound (ELBO) and provide a variational inference type algorithm that allows computing it. In the algorithm, the coders' mappings are parametrized using Neural Networks (NN), and the bound is approximated by Monte Carlo sampling and optimized with stochastic gradient descent. Furthermore, we show how tuning the hyperparameter $s$ appropriately by gradually increasing its value with iterations (number of epochs) results in a better accuracy. Furthermore, the application of our algorithm to the unsupervised clustering of various datasets, including the MNIST~\cite{MNIST}, REUTERS~\cite{REUTERS}, and STL-10~\cite{STL10}, allows a better clustering accuracy than previous state-of-the-art algorithms. For instance, we show that our algorithm performs better than the Variational Deep Embedding (VaDE) algorithm of~\cite{VADE-JZTTZ17}, which is based on VAE and performs clustering by maximizing the ELBO. Our algorithm can be seen as a generalization of the VaDE, whose ELBO can be recovered by setting $s=1$ in our cost function. In addition, our algorithm also generalizes the VIB of~\cite{AFDM17}, which models the latent space as an isotropic Gaussian, which is generally not expressive enough for the purpose of unsupervised clustering. Other related works, which are of lesser relevance to the contribution of this paper, are the Deep Embedded Clustering (DEC) of~\cite{DEC-XGF16} and the Improved Deep Embedded Clustering (IDEC) of~\cite{IDEC-GGLY17} and \cite{DMGLSAS17}. For a detailed survey of clustering with deep learning, the readers may refer to~\cite{MGLZCL18}. 

To the best of our knowledge, our algorithm performs the best in terms of clustering accuracy by using deep neural networks without any prior knowledge regarding the labels (except the usual assumption that the number of classes is known) compared to the state-of-the-art algorithms of the unsupervised learning category. In order to achieve the outperforming accuracy: (i) we derive a cost function that contains the IB hyperparameter $s$ that controls optimal trade-offs between the accuracy and regularization of the model; (ii) we use a lower bound approximation for the KL term in the cost function, that does not depend on the clustering assignment probability (note that the clustering assignment is usually not accurate in the beginning of the training process); and (iii) we tune the hyperparameter $s$ by following an annealing approach that improves both the convergence and the accuracy of the proposed algorithm.

\vspace{0.5em}
Throughout this paper, we use the following notation. Uppercase letters are used to denote random variables, e.g., $X$; lowercase letters are used to denote realizations of random variables, e.g., $x$; and calligraphic letters denote sets, e.g., $\mc X$. The cardinality of a set $\mc X$ is denoted by $|\mc X|$. Probability mass functions (pmfs) are denoted by $P_X(x)=\mathrm{Pr}\{X=x\}$ and, sometimes, for short, as $p(x)$. Boldface uppercase letters denote vectors or matrices, e.g., $\dv X$, where context should make the distinction clear. We denote the covariance of a zero mean, complex-valued, vector $\dv X$ by $\mathbf{\Sigma}_{\mathbf{x}} =\mathbb{E}[\mathbf{XX}^{\dag}]$, where $(\cdot)^{\dag}$ indicates the conjugate transpose. For random variables $X$ and $Y$, the entropy is denoted as $H(X)$, i.e., $H(X)= \E_{P_X}[-\log P_X]$, and the mutual information is denoted as $I(X;Y)$, i.e., \mbox{$I(X;Y) = H(X) - H(X|Y) = H(Y) - H(Y|X) = \E_{P_{X,Y}}[\log \frac{P_{X,Y}}{P_X P_Y}]$}. Finally, for two probability measures $P_X$ and $Q_X$ on a random variable $X \in \mc X$, the relative entropy or Kullback--Leibler divergence is denoted as $D_\mathrm{KL}(P_{X} \| Q_{X})$, i.e., $D_\mathrm{KL}(P_{X} \| Q_{X})=\E_{P_X}[\log \frac{P_X}{Q_X}]$.

 \begin{figure}[H]
 \centering
 \includegraphics[width=\linewidth]{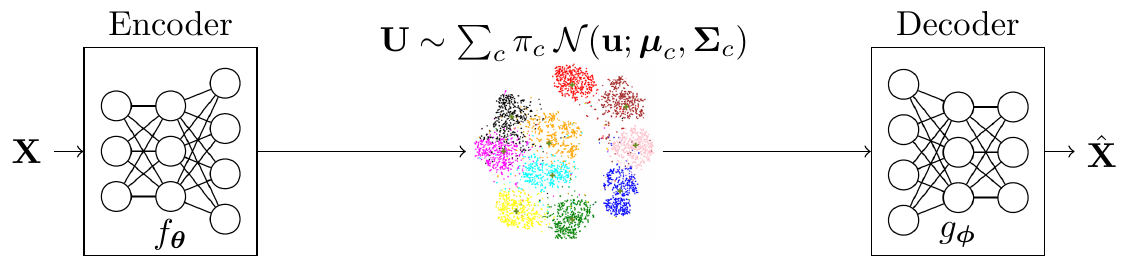}
 \caption{Variational Information Bottleneck with Gaussian mixtures.}	
 \label{fig-vae}
 \end{figure}

\section{Proposed Model}
\label{ch-problem-definition}

In this section, we explain the proposed model, the so-called Variational Information Bottleneck with Gaussian Mixture Model (VIB-GMM), in which we use the VIB framework and model the latent space as a GMM. The proposed model is depicted in Figure~\ref{fig-vae}, where the parameters $\pi_c$, $\boldsymbol{\mu}_c$, $\boldsymbol{\Sigma}_c$, for all values of $c \in \mc C$, are to be optimized jointly with those of the employed NNs as instantiation of the coders. Furthermore, the assignment probabilities are estimated based on the values of latent space vectors instead of the observations themselves, i.e., $P_{C|\dv X} = Q_{C|\dv U}$. In the rest of this section, we elaborate on the inference and generative network models for our method, which are illustrated below.

\begin{figure}[h!]
	\centering
	\begin{minipage}{0.48\textwidth}
		\centering
		\includegraphics[width=0.8\linewidth]{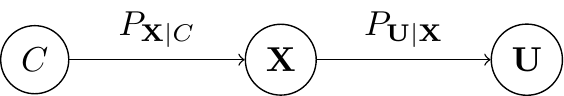}
		\caption{Inference Network.}	
		\label{fig-MC-encoder}
	\end{minipage} 
	\hfill
	\begin{minipage}{0.48\textwidth}
		\centering
		\includegraphics[width=0.8\linewidth]{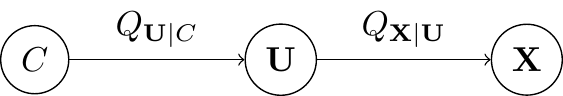}
		\caption{Generative Network.}	
		\label{fig-MC-decoder}
	\end{minipage} 
\end{figure}

\subsection{Inference Network Model}

We assume that observed data $\dv x$ are generated from a GMM with $|\mc C|$ components. Then, the latent representation $\dv u$ is inferred according to the following procedure:
\begin{enumerate}[label*=\arabic*.]
\item One of the components of the GMM is chosen according to a categorical variable $C$.
\item The data $\dv x$ are generated from the $c^{\text{th}}$ component of the GMM, i.e., $P_{\dv X|C} \sim \mc{N} (\dv x; \tilde{\boldsymbol\mu}_c, \tilde{\boldsymbol\Sigma}_c)$. 
\item Encoder maps $\dv x$ to a latent representation $\dv u$ according to $P_{\dv U|\dv X}\sim \mc{N}(\boldsymbol\mu_\theta, \boldsymbol\Sigma_\theta)$.
\begin{enumerate}[leftmargin=2em, label*=\arabic*.]
\item The encoder is modeled with a DNN $f_\theta$, which maps $\dv x$ to the parameters of a Gaussian distribution, i.e., $[\boldsymbol\mu_\theta, \boldsymbol\Sigma_\theta] = f_\theta(\dv x)$.
\item The representation $\dv u$ is sampled from $\mc{N}(\boldsymbol\mu_\theta, \boldsymbol\Sigma_\theta)$.
\end{enumerate}
\end{enumerate} 
For the inference network, shown in Figure~\ref{fig-MC-encoder}, the following Markov chain holds
\begin{align}~\label{eq:MC-encoder}
C \mkv \dv X \mkv \dv U \;.
\end{align} 


\subsection{Generative Network Model}
Since the encoder extracts useful representations of the dataset and we assume that the dataset is generated from a GMM, we model our latent space also with a mixture of Gaussians. To do so, the categorical variable $C$ is embedded with the latent variable $\dv U$. The reconstruction of the dataset is generated according to the following procedure:
\begin{enumerate}[label*=\arabic*.]
\item One of the components of the GMM is chosen according to a categorical variable $C$, with a prior $Q_C$.
\item The representation $\dv u$ is generated from the $c^{\text{th}}$ component, i.e., $Q_{\dv U|C} \sim \mc{N} (\dv u; \boldsymbol\mu_c, \boldsymbol\Sigma_c)$.
\item The decoder maps the latent representation $\dv u$ to $\hat{\dv x}$, which is the reconstruction of the source $\dv x$ by using the mapping $Q_{\dv X| \dv U}$. 
\begin{enumerate}[leftmargin=2em, label*=\arabic*.]
\item The decoder is modeled with a DNN $g_\phi$ that maps $\dv u$ to the estimate $\hat{\dv x}$, i.e., $[\hat{\dv x}] = g_\phi(\dv u)$.
\end{enumerate}
\end{enumerate} 
For the generative network, shown in Figure~\ref{fig-MC-decoder}, the following Markov chain holds
\begin{align}~\label{eq:MC-decoder}
C \mkv \dv U \mkv \dv X \;.
\end{align} 


\section{Proposed Method}	
\label{ch-method}

In this section, we present our clustering method. First, we provide a general cost function for the problem of the unsupervised clustering that we study here based on the variational IB framework; and we show that it generalizes the ELBO bound developed in~\cite{VADE-JZTTZ17}. We then parametrize our model using NNs whose parameters are optimized jointly with those of the GMM. Furthermore, we discuss the influence of the hyperparameter $s$ that controls optimal trade-offs between accuracy and regularization. 

\subsection{Brief review of Variational Information Bottleneck for Unsupervised Learning}

As described in Section~\ref{ch-problem-definition}, the stochastic encoder $P_{\dv U| \dv X}$ maps the observed data $\dv x$ to a representation $\dv u$. Similarly, the stochastic decoder $Q_{\dv X| \dv U}$ assigns an estimate $\hat{\dv x}$ of $\dv x$ based on the vector $\dv u$. As per the IB method~\cite{IB-TPB99}, a suitable representation $\dv U$ should strike the right balance between capturing all information about the categorical variable $C$ that is contained in the observation $\dv X$ and using the most concise representation for it. This leads to maximizing the following Lagrange problem
\begin{equation}~\label{eq:minimization-original-loss}
\mc L_s(\dv P) = I(C; \dv U) - s I(\dv X; \dv U) \;,
\end{equation}
where $s \geq 0$ designates the Lagrange multiplier and, for convenience, $\dv P$ denotes the conditional distribution $P_{\dv U| \dv X}$.

Instead of~\eqref{eq:minimization-original-loss}, which is not always computable in our unsupervised clustering setting, we find it convenient to maximize an upper bound of $\mc L_s(\dv P)$ given by
\begin{equation}~\label{eq:upper-bound}
\tilde{\mc L}_s(\dv P) := I(\dv X; \dv U) - s I(\dv X; \dv U) \: \stackrel{(a)}{=} H(\dv X) - H(\dv X| \dv U) - s [ H(\dv U) - H(\dv U| \dv X) ] \;,
\end{equation}
where $(a)$ is due to the definition of mutual information (using the Markov chain $C \mkv \dv X \mkv \dv U$, it is easy to see that $\tilde{\mc L}_s(\dv P) \geq \mc L_s(\dv P)$ for all values of $\dv P$). Noting that $H(\dv X)$ is constant with respect to $P_{\dv U| \dv X}$, maximizing $\tilde{\mc L}_s(\dv P)$ over $\dv P$ is equivalent to maximizing
\begin{align}
\mc L^\prime_s(\dv P) :&= - H(\dv X| \dv U) - s [ H(\dv U) - H(\dv U| \dv X) ] \label{eq:upper-bound-ver2}\\
&=\E_{P_{\dv X}}\left[\E_{P_{\dv U| \dv X}}[ \log P_{\dv X| \dv U} +s \log P_{\dv U} - s \log P_{\dv U| \dv X}]\right] \;. \label{eq:upper-bound-ver3}
\end{align}
For a variational distribution $Q_{\dv U}$ on $\mc U$ (instead of the unknown $P_{\dv U}$) and a variational stochastic decoder $Q_{\dv X |\dv U}$ (instead of the unknown optimal decoder $P_{\dv X |\dv U}$), let $\dv Q := \{ Q_{\dv X| \dv U}, Q_{\dv U} \}$. Furthermore, let
\begin{equation}~\label{eq:variational-bound}
\mc L_s^\mathrm{VB}(\dv P, \dv Q) := \E_{P_{\dv X}} \left[ 
\E_{P_{\dv U| \dv X}}[ \log Q_{\dv X| \dv U}] 
- s D_\mathrm{KL}(P_{\dv U|\dv X} \| Q_{\dv U}) 
\right] \;.
\end{equation}

\vspace{0.1em}
\begin{lemma}~\label{lemma-clustering-variational-bound}
For given $\dv P$, we have:
\begin{equation*}
\mc L_s^\mathrm{VB}(\dv P, \dv Q) \leq \mc L^\prime_s(\dv P) \;, \quad \text{for all } \dv Q \;.
\end{equation*}
In addition, there exists a unique $\dv Q$ that achieves the maximum $\argmax_{\dv Q} \mc L_s^\mathrm{VB}(\dv P, \dv Q) = \mc L^\prime_s(\dv P)$ and is given by
\begin{equation*}
Q^*_{\dv X| \dv U} = P_{\dv X| \dv U} \;, \quad Q^*_{\dv U} = P_{\dv U} \;. 
\end{equation*}
\end{lemma}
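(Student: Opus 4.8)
The plan is to prove both claims at once by computing the gap $\mc L^\prime_s(\dv P) - \mc L_s^\mathrm{VB}(\dv P, \dv Q)$ in closed form and recognizing it as a nonnegative combination of Kullback--Leibler divergences. The inequality then follows from the nonnegativity of relative entropy (Gibbs' inequality), and the characterization of the optimal $\dv Q$ follows from the equality condition for that same inequality.

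First I would rewrite $\mc L_s^\mathrm{VB}(\dv P, \dv Q)$ by expanding the KL term through its definition, $D_\mathrm{KL}(P_{\dv U|\dv X}\|Q_{\dv U}) = \E_{P_{\dv U|\dv X}}[\log P_{\dv U|\dv X} - \log Q_{\dv U}]$, so that
\[
\mc L_s^\mathrm{VB}(\dv P, \dv Q) = \E_{P_{\dv X}}\big[\E_{P_{\dv U|\dv X}}[\log Q_{\dv X|\dv U} - s\log P_{\dv U|\dv X} + s\log Q_{\dv U}]\big].
\]
Subtracting this from the expression for $\mc L^\prime_s(\dv P)$ in~\eqref{eq:upper-bound-ver3}, the $-s\log P_{\dv U|\dv X}$ and $+s\log P_{\dv U|\dv X}$ contributions cancel, and after collapsing the nested expectations over $P_{\dv X}$ and $P_{\dv U|\dv X}$ into a single expectation over the joint law $P_{\dv X,\dv U}$ induced by $\dv P$, I obtain
\[
\mc L^\prime_s(\dv P) - \mc L_s^\mathrm{VB}(\dv P, \dv Q) = \E_{P_{\dv X,\dv U}}\Big[\log\tfrac{P_{\dv X|\dv U}}{Q_{\dv X|\dv U}}\Big] + s\,\E_{P_{\dv X,\dv U}}\Big[\log\tfrac{P_{\dv U}}{Q_{\dv U}}\Big].
\]

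Next I would identify each term. Conditioning on $\dv U$, the first term equals $\E_{P_{\dv U}}[D_\mathrm{KL}(P_{\dv X|\dv U}\|Q_{\dv X|\dv U})]$. In the second term the integrand depends on $\dv U$ alone, so marginalizing out $\dv X$ reduces it to $s\,D_\mathrm{KL}(P_{\dv U}\|Q_{\dv U})$, a divergence between \emph{marginals} rather than a joint quantity. Hence
\[
\mc L^\prime_s(\dv P) - \mc L_s^\mathrm{VB}(\dv P, \dv Q) = \E_{P_{\dv U}}\big[D_\mathrm{KL}(P_{\dv X|\dv U}\|Q_{\dv X|\dv U})\big] + s\,D_\mathrm{KL}(P_{\dv U}\|Q_{\dv U}),
\]
which is nonnegative since each relative entropy is nonnegative and $s\ge 0$; this establishes $\mc L_s^\mathrm{VB}(\dv P, \dv Q)\le\mc L^\prime_s(\dv P)$ for all $\dv Q$.

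For the maximizer I would invoke the equality case of Gibbs' inequality: a relative entropy vanishes precisely when its two arguments coincide. The gap is zero iff $D_\mathrm{KL}(P_{\dv X|\dv U}\|Q_{\dv X|\dv U})=0$ for $P_{\dv U}$-almost every $\dv u$ and $D_\mathrm{KL}(P_{\dv U}\|Q_{\dv U})=0$, forcing $Q^*_{\dv X|\dv U}=P_{\dv X|\dv U}$ and $Q^*_{\dv U}=P_{\dv U}$. The computation itself is routine bookkeeping; the only genuine subtlety, and the point I would flag, is the uniqueness claim when $s=0$: there the second divergence drops out of the gap and $Q_{\dv U}$ is left unconstrained, so strict uniqueness of $Q^*_{\dv U}$ really requires $s>0$. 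The two things to get right are correctly merging the nested expectations into a joint one and marginalizing the $\log(P_{\dv U}/Q_{\dv U})$ term down to a divergence between marginals.
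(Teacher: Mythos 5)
Your proof is correct and follows essentially the same route as the paper's: Appendix~\ref{appendix-proof-lemma-variational-bound} likewise computes the gap $\mc L^\prime_s(\dv P) - \mc L_s^\mathrm{VB}(\dv P, \dv Q) = \E_{P_{\dv U}}[D_\mathrm{KL}(P_{\dv X|\dv U}\|Q_{\dv X|\dv U})] + s\,D_\mathrm{KL}(P_{\dv U}\|Q_{\dv U})$ and concludes by nonnegativity of relative entropy with equality exactly at $Q^*_{\dv X|\dv U}=P_{\dv X|\dv U}$, $Q^*_{\dv U}=P_{\dv U}$. Your observation that uniqueness of $Q^*_{\dv U}$ genuinely requires $s>0$ is a valid refinement that the paper's proof does not make explicit.
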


\begin{proof}
The proof of Lemma~\ref{lemma-clustering-variational-bound} is given in Appendix~\ref{appendix-proof-lemma-variational-bound}.
\end{proof}

\vspace{1em}
Using Lemma~\ref{lemma-clustering-variational-bound}, maximization of~\eqref{eq:upper-bound-ver2} can be written in term of the variational IB cost as follows
\begin{align}~\label{eq:optim-variational-bound}
\argmax_{\dv P} \mc L^\prime_s(\dv P) = \argmax_{\dv P} \argmax_{\dv Q} \mc L_s^\mathrm{VB}(\dv P, \dv Q) \;.
\end{align}

\vspace{1em}
\noindent
Next, we develop an algorithm that solves the maximization problem~\eqref{eq:optim-variational-bound}, where the encoding mapping $P_{\dv U|\dv X}$, the decoding mapping $Q_{\dv X| \dv U}$, as well as the prior distribution of the latent space $Q_{\dv U}$ are optimized jointly.

\newpage
\begin{remark}\label{remark-vade}
As we already mentioned in the beginning of this section, the related work~\cite{VADE-JZTTZ17} performed unsupervised clustering by combining VAE with GMM. Specifically, it maximizes the following ELBO bound
\begin{equation}~\label{eq:VaDE}
\mc L_1^\mathrm{VaDE} := \E_{P_{\dv X}} \left[ 
\E_{P_{\dv U| \dv X}}[ \log Q_{\dv X| \dv U}]
- D_\mathrm{KL}(P_{C|\dv X} \| Q_{C})
- \E_{P_{C| \dv X}}[ D_\mathrm{KL}(P_{\dv U| \dv X} \| Q_{\dv U| C}) ]
\right]. 
\end{equation}
Let, for an arbitrary non-negative parameter $s$, $\mc L_s^\mathrm{VaDE}$ be a generalization of the ELBO bound in~\eqref{eq:VaDE} of~\cite{VADE-JZTTZ17} given by
\begin{equation}~\label{eq:VaDE_general}
\mc L_s^\mathrm{VaDE} := \E_{P_{\dv X}} \left[ 
\E_{P_{\dv U| \dv X}}[ \log Q_{\dv X| \dv U}]
- s D_\mathrm{KL}(P_{C|\dv X} \| Q_{C})
- s \E_{P_{C| \dv X}}[ D_\mathrm{KL}(P_{\dv U| \dv X} \| Q_{\dv U| C}) ]
\right]. 
\end{equation}
	
Investigating the right-hand side (RHS) of~\eqref{eq:VaDE_general}, we get
\begin{equation}~\label{eq:VB-alternate}
\mc L_s^\mathrm{VB}(\dv P, \dv Q) 
= \mc L_s^\mathrm{VaDE} 
+ s \E_{P_{\dv X}} \left[ \E_{P_{\dv U| \dv X}}[ D_\mathrm{KL}(P_{C| \dv X} \| Q_{C| \dv U}) ] \right] \;,
\end{equation}
where the equality holds since
\begin{align}
\mc L_s^\mathrm{VaDE} &= \E_{P_{\dv X}} \Big[ 
\E_{P_{\dv U| \dv X}}[ \log Q_{\dv X| \dv U}]
- s D_\mathrm{KL}(P_{C|\dv X} \| Q_{C})
- s \E_{P_{C| \dv X}}[ D_\mathrm{KL}(P_{\dv U| \dv X} \| Q_{\dv U| C}) ]
\Big] \label{eq-equivalence-vade} \\[0.1em]
&\stackrel{(a)}{=} \E_{P_{\dv X}} \Big[ 
\E_{P_{\dv U| \dv X}}[ \log Q_{\dv X| \dv U}]
- s D_\mathrm{KL}(P_{\dv U|\dv X} \| Q_{\dv U}) 
- s \E_{P_{\dv U| \dv X}}\big[ D_\mathrm{KL}(P_{C| \dv X} \| Q_{C| \dv U})
\Big] \label{eq-equivalence-alternative}\\[0.1em]
&\stackrel{(b)}{=} \mc L_s^\mathrm{VB}(\dv P, \dv Q) 
- s \E_{P_{\dv X}} \Big[ \E_{P_{\dv U| \dv X}}\big[ D_\mathrm{KL}(P_{C| \dv X} \| Q_{C| \dv U}) \big] \Big] \; ,
\end{align}
where $(a)$ can be obtained by expanding and re-arranging terms under the Markov chain $C \mkv \dv X \mkv \dv U$ (for a detailed treatment, please look at Appendix~\ref{appendix-vade}); and $(b)$ follows from the definition of $\mc L_s^\mathrm{VB}(\dv P, \dv Q)$ in~\eqref{eq:variational-bound}.
	
Thus, by the non-negativity of relative entropy, it is clear that $\mc L_s^\mathrm{VaDE}$ is a lower bound on $\mc L_s^\mathrm{VB}(\dv P, \dv Q)$. Furthermore, if the variational distribution $\dv Q$ is such that the conditional marginal $Q_{C|\dv U}$ is equal to $P_{C|\dv X}$, the bound is tight since the relative entropy term is zero in this case.
\qedblack
\end{remark}

\subsection{Proposed Algorithm: VIB-GMM}
In order to compute~\eqref{eq:optim-variational-bound}, we parametrize the distributions $P_{\dv U| \dv X}$ and $Q_{\dv X| \dv U}$ using DNNs. For instance, let the stochastic encoder $P_{\dv U| \dv X}$ be a DNN $f_\theta$ and the stochastic decoder $Q_{\dv X| \dv U}$ be a DNN $g_{\boldsymbol\phi}$. That is
\begin{equation}~\label{eq:mu-sigma-enc}	 
\begin{aligned}
&P_\theta(\dv u|\dv x) = \mc{N}(\dv u; \boldsymbol\mu_\theta, \dv\Sigma_\theta)\;, \quad\text{where } [\boldsymbol\mu_\theta, \dv\Sigma_\theta] = f_\theta(\dv x) \;,\\ 
&Q_\phi(\dv x|\dv u) = g_\phi(\dv u) = [\hat{\dv x}] \;, 
\end{aligned}
\end{equation}
where $\theta$ and $\phi$ are the weight and bias parameters of the DNNs. Furthermore, the latent space is modeled as a GMM with $|\mc C|$ components with parameters \mbox{$\psi:=\{\pi_c, \boldsymbol\mu_c, \dv\Sigma_c\}_{c=1}^{|\mc C|}$}, i.e.,
\begin{equation}~\label{eq:summarize-parametrization}	 
Q_\psi(\dv u) = \sum_c \pi_c \: \mc{N} (\dv u; \boldsymbol\mu_c, \dv\Sigma_c) \;. 
\end{equation}

\noindent
Using the parametrizations above, the optimization of~\eqref{eq:optim-variational-bound} can be rewritten as
\begin{equation}~\label{eq:optimization-NN-loss} 
\argmax_{\theta, \phi, \psi} \; \mc L_s^\mathrm{NN}(\theta, \phi, \psi) \;
\end{equation}
where the cost function $\mc L_s^\mathrm{NN}(\theta, \phi, \psi)$ is given by
\begin{equation}~\label{eq:minimization-NN-loss} 
\mc L_s^\mathrm{NN}(\theta, \phi, \psi) := \E_{P_{\dv X}} \Big[ 
\E_{P_{\theta}(\dv U| \dv X)}[ \log Q_{\phi}(\dv X| \dv U)] 
- s D_\mathrm{KL}(P_{\theta}(\dv U| \dv X) \| Q_{\psi}(\dv U) )
\Big] \;.
\end{equation}

\noindent
Then, for given observations of $N$ samples, i.e., $\{\dv x_i\}_{i=1}^N$,~\eqref{eq:optimization-NN-loss} can be approximated in terms of an empirical cost as follows
\begin{equation}~\label{equation-clustering-ib-average-empirical}
\argmax_{\theta, \phi, \psi} \: \frac{1}{N} \sum_{i=1}^N \mc L_{s,i}^\mathrm{emp}(\theta, \phi, \psi) \;,
\end{equation}
where $\mc L_{s,i}^\mathrm{emp}(\theta, \phi, \psi)$ is the empirical cost for the $i^{\text{th}}$ observation $\dv x_i$ and given by
\begin{equation}~\label{equation-clustering-ib-emprical-cost}
\mc L_{s,i}^\mathrm{emp}(\theta, \phi, \psi) = \E_{P_{\theta}(\dv U_i| \dv X_i)}[ \log Q_{\boldsymbol\phi}(\dv X_i| \dv U_i)] - s D_\mathrm{KL}(P_{\theta}(\dv U_i| \dv X_i) \| Q_{\boldsymbol\psi}(\dv U_i) ) \;.
\end{equation}

\noindent 
Furthermore, the first term of the RHS of~\eqref{equation-clustering-ib-emprical-cost} can be computed using Monte Carlo sampling and the reparametrization trick~\cite{KW14}. In particular, $P_{\theta}(\dv u| \dv x)$ can be sampled by first sampling a random variable $\dv Z$ with distribution $P_{\dv Z}$, i.e., $P_{\dv Z} = \mc{N} (\dv 0, \dv I)$, then transforming the samples using some function $\tilde{f}_\theta : \mc X \times \mc Z \rightarrow \mc U$, i.e., $\dv u = \tilde{f}_\theta (\dv x, \dv z)$. Thus, 
\begin{equation*}
\E_{P_\theta(\dv U_i| \dv X_i)}[ \log Q_\phi(\dv X_i| \dv U_i)] = \frac{1}{M} \sum_{m=1}^M \log q(\dv x_i|\dv u_{i,m}), \quad \dv u_{i,m} = \boldsymbol\mu_{\theta, i} + \boldsymbol\Sigma_{\theta, i}^{\frac{1}{2}} \cdot \boldsymbol\epsilon_m,
\quad \epsilon_m \sim \mc{N} (\dv 0, \dv I) \;, 
\end{equation*}
where $M$ is the number of samples for the Monte Carlo sampling step. 

The second term of the RHS of~\eqref{equation-clustering-ib-emprical-cost} is the KL divergence between a single component multivariate Gaussian and a GMM with $|\mc C|$ components. An exact closed-form solution for the calculation of this term does not exist. However, a variational lower bound approximation~\cite{HO07} of it can be obtained as
\begin{equation}~\label{equation-clustering-kl-variational} 
D_\mathrm{KL}(P_{\boldsymbol\theta}(\dv U_i| \dv X_i) \| Q_{\boldsymbol\psi}(\dv U_i)) 
= - \log \sum_{c=1}^{|\mc C|} \pi_c \exp\left( - D_\mathrm{KL}( \mc{N}(\boldsymbol\mu_{\theta, i}, \dv\Sigma_{\theta, i}) \| \mc{N} (\boldsymbol\mu_c, \dv\Sigma_c) \right) \;. 
\end{equation}

\noindent
In particular, in the specific case in which the covariance matrices are diagonal, i.e., 
$\dv\Sigma_{\theta, i} := \mathrm{diag}(\{ \sigma_{\theta,i,j}^2\}_{j=1}^{n_u})$ and $\dv\Sigma_{c} := \mathrm{diag}(\{\sigma_{c,j}^2\}_{j=1}^{n_u})$, with $n_u$ denoting the latent space dimension,~\eqref{equation-clustering-kl-variational} can be computed as follows
\begin{align} 
D_\mathrm{KL}(P_{\boldsymbol\theta}(\dv U_i| \dv X_i) \| Q_{\boldsymbol\psi}(\dv U_i)) = - \log \sum_{c=1}^{|\mc C|} \pi_c \exp \bigg( -\frac{1}{2} \sum_{j=1}^{n_u} 
\Big[\frac{(\mu_{\theta,i,j} - \mu_{c,j})^2}{\sigma_{c,j}^2} 
+ \log \frac{\sigma_{c,j}^2}{\sigma_{\theta,i,j}^2} -1 
+ \frac{\sigma_{\theta,i,j}^2}{\sigma_{c,j}^2} \Big] \bigg) \;, 
\end{align} 
where $\mu_{\theta,i,j}$ and $\sigma_{\theta,i,j}^2$ are the mean and variance of the $i^{\text{th}}$ representation in the $j^{\text{th}}$ dimension of the latent space. Furthermore, $\mu_{c,j}$ and $\sigma_{c,j}^2$ represent the mean and variance of the $c^{\text{th}}$ component of the GMM in the $j^{\text{th}}$ dimension of the latent space.

Finally, we train NNs to maximize the cost function~\eqref{eq:minimization-NN-loss} over the parameters $\theta, \phi$, as well as those $\psi$ of the GMM. For the training step, we use the ADAM optimization tool~\cite{KB15}. The training procedure is detailed in Algorithm~\ref{ALGO-VIB-GMM}.

\vspace{1em}
\begin{algorithm}[H]
\caption{VIB-GMM algorithm for unsupervised learning.}
\label{ALGO-VIB-GMM}
{\fontsize{10}{10}\selectfont
\begin{algorithmic}[1]
\smallskip
\Inputs{Dataset $\mc D := \{\dv x_i\}_{i=1}^N$, parameter $s\geq0$.} 
\Outputs{Optimal DNN weights $\theta^\star, \phi^\star$ and GMM parameters $\psi^\star = \{\pi_c^\star$, $\boldsymbol{\mu}_c^\star$, $\boldsymbol{\Sigma}_c^\star\}_{c=1}^{|\mc C|}$.}
\Initialize{Initialize $\theta, \phi, \psi$.}
\Repeat 
\State Randomly select $b$ mini-batch samples $\{\dv x_i\}_{i=1}^b$ from $\mc D$. 
\State Draw $m$ random i.i.d samples $\{\dv z_j\}_{j=1}^m$ from $P_{\dv Z}$. 
\State Compute $m$ samples $\dv u_{i,j} = \tilde{f}_\theta(\dv x_i, \dv z_j)$ 
\State For the selected mini-batch, compute gradients of the empirical cost~\eqref{equation-clustering-ib-average-empirical}. 
\State Update $\theta, \phi, \psi$ using the estimated gradient (e.g., with SGD or ADAM). 
\Until{convergence of $\theta, \phi, \psi$.}
\end{algorithmic} }
\end{algorithm}

Once our model is trained, we assign the given dataset into the clusters. As mentioned in Section~\ref{ch-problem-definition}, we do the assignment from the latent representations, i.e., $Q_{C|\dv U} = P_{C|\dv X}$. Hence, the probability that the observed data $\dv x_i$ belongs to the $c^{\text{th}}$ cluster is computed as follows
\begin{equation}
p(c|\dv x_i) = q(c| \dv u_i) = \frac{q_{\psi^\star}(c) q_{\psi^\star}(\dv u_i| c)}{q_{\psi^\star}(\dv u_i)} = \frac{ \pi^\star_c \mc{N} (\dv u_i; \boldsymbol\mu^\star_c, \dv\Sigma^\star_c)}{\sum_c \pi^\star_c \mc{N} (\dv u_i; \boldsymbol\mu^\star_c, \dv\Sigma^*_c)} \;,
\end{equation}
where $^\star$ indicates the optimal values of the parameters as found at the end of the training phase. Finally, the right cluster is picked based on the largest assignment probability value. 

\begin{remark}
It is worth mentioning that with the use of the KL approximation as given by~\eqref{equation-clustering-kl-variational}, our algorithm does not require the assumption $P_{C|\dv U} = Q_{C|\dv U}$ to hold (which is different from~\cite{VADE-JZTTZ17}). Furthermore, the algorithm is guaranteed to converge. However, the convergence may be to (only) local minima; and this is due to the problem~\eqref{eq:optimization-NN-loss} being generally non-convex. Related to this aspect, we mention that while without a proper pre-training, the accuracy of the VaDE algorithm may not be satisfactory, in our case, the above assumption is only used in the final assignment after the training phase is completed.
\qedblack
\end{remark}

\begin{remark}
In~\cite{AS18}, it is stated that optimizing the original IB problem with the assumption of independent latent representations amounts to disentangled representations. It is noteworthy that with such an assumption, the computational complexity can be reduced from $\mc{O}(n_u^2)$ to $\mc{O}(n_u)$. Furthermore, as argued in~\cite{AS18}, the assumption often results only in some marginal performance loss; and for this reason, it is adopted in many machine learning applications.
\qedblack
\end{remark}

\subsection{Effect of the Hyperparameter}

As we already mentioned, the hyperparameter $s$ controls the trade-off between the relevance of the representation $\dv U$ and its complexity. As can be seen from~\eqref{eq:minimization-NN-loss} for small values of $s$, it is the cross-entropy term that dominates, i.e., the algorithm trains the parameters so as to reproduce $\dv X$ as accurately as possible. For large values of $s$, however, it is most important for the NN to produce an encoded version of $\dv X$ whose distribution matches the prior distribution of the latent space, i.e., the term $D_\mathrm{KL}(P_{\boldsymbol\theta}(\dv U| \dv X) \| Q_{\boldsymbol\psi}(\dv U) )$ is nearly zero.

In the beginning of the training process, the GMM components are randomly selected; and so, starting with a large value of the hyperparameter $s$ is likely to steer the solution towards an irrelevant prior. Hence, for the tuning of the hyperparameter $s$ in practice, it is more efficient to start with a small value of $s$ and gradually increase it with the number of epochs. This has the advantage of avoiding possible local minima, an aspect that is reminiscent of deterministic annealing~\cite{S02}, where $s$ plays the role of the temperature parameter. The experiments that will be reported in the next section show that proceeding in the above-described manner for the selection of the parameter $s$ helps in obtaining higher clustering accuracy and better robustness to the initialization (i.e., no need for a strong pretraining). The pseudocode for annealing is given in Algorithm~\ref{ALGO-ANNEALING}. 

\begin{algorithm}[H]
\caption{Annealing algorithm pseudocode.}
\label{ALGO-ANNEALING}
{\fontsize{10}{10}\selectfont
\begin{algorithmic}[1]
\smallskip
\Inputs{Dataset $\mc D := \{\dv x_i\}_{i=1}^n$, hyperparameter interval $[s_\mathrm{min},s_\mathrm{max}]$.}
\Outputs{Optimal DNN weights $\theta^\star, \phi^\star$, GMM parameters $\psi^\star = \{\pi_c^\star$, $\boldsymbol{\mu}_c^\star$, $\boldsymbol{\Sigma}_c^\star\}_{c=1}^{|\mc C|}$, \\ assignment probability $P_{C|\dv X}$.}	
\smallskip
\smallskip
\smallskip
\Initialize{Initialize $\theta, \phi, \psi$.}
\Repeat 
\State Apply VIB-GMM algorithm. 
\State Update $\psi, \theta, \phi$.
\State Update $s$, e.g., $s = (1+\epsilon_s) s_\mathrm{old}$.
\Until{$s$ does not exceed $s_\mathrm{max}$.}
\end{algorithmic} }
\end{algorithm}

\begin{remark}
As we mentioned before, a text clustering algorithm is introduced by Slonim~{et al.}~\cite{S02, ST00}, which uses the IB method with an annealing procedure, where the parameter $s$ is increased gradually. In~\cite{S02}, the critical values of $s$ (so-called phase transitions) are observed such that if these values are missed during increasing $s$, the algorithm ends up with the wrong clusters. Therefore, how to choose the step size in the update of $s$ is very important. We note that tuning $s$ is also very critical in our algorithm, such that the step size $\epsilon_s$ in the update of $s$ should be chosen carefully, otherwise phase transitions might be skipped that would cause a non-satisfactory clustering accuracy score. However, the choice of the appropriate step size (typically very small) is rather heuristic; and there exists no concrete method for choosing the right value. The choice of step size can be seen as a trade-off between the amount of computational resource spared for running the algorithm and the degree of confidence about scanning $s$ values not to miss the phase transitions.	
\qedblack
\end{remark}

\section{Experiments}
\label{ch-experiment}

\subsection{Description of the Datasets Used}

In our empirical experiments, we apply our algorithm to the clustering of the following datasets. 

\noindent
{MNIST}: A dataset of gray-scale images of 70,000 handwritten digits of dimensions $28 \times 28$ pixel. 

\vspace{0.3em}
\noindent
{STL-10}: A dataset of color images collected from 10 categories. Each category consists of 1300 images of size of $96 \times 96$ (pixels) $\times 3$ (RGB code). Hence, the original input dimension $n_x$ is 27,648. For this dataset, we use a pretrained convolutional NN model, i.e., ResNet-50~\cite{ResNet50} to reduce the dimensionality of the input. This preprocessing reduces the input dimension to 2048. Then, our algorithm and other baselines are used for clustering.  

\vspace{0.3em}
\noindent
{REUTERS10K}: A dataset that is composed of 810,000 English stories labeled with a category tree. As in~\cite{DEC-XGF16}, 4 root categories (corporate/industrial, government/social, markets, economics) are selected as labels, and all documents with multiple labels are discarded. Then, tf-idf features are computed on the 2000 most frequently occurring words. Finally, 10,000 samples are taken randomly, which are referred to as the REUTERS10K dataset. 

\subsection{Network Settings and Other Parameters}\label{section-parameters}

We use the following network architecture: the encoder is modeled with NNs with 3 hidden layers with dimensions $n_x - 500 - 500 - 2000 - n_u$, where $n_x$ is the input dimension and $n_u$ is the dimension of the latent space. The decoder consists of NNs with dimensions $n_u - 2000 - 500 - 500 - n_x $. All layers are fully connected. For comparison purposes, we chose the architecture of the hidden layers as well as the dimension of the latent space $n_u = 10$ to coincide with those made for the DEC algorithm of~\cite{DEC-XGF16} and the VaDE algorithm of~\cite{VADE-JZTTZ17}. All except the last layers of the encoder and decoder are activated with ReLU function. For the last (i.e., latent) layer of the encoder we use a linear activation; and for the last (i.e., output) layer of the decoder we use sigmoid function for MNIST and linear activation for the remaining datasets. The batch size is 100 and the variational bound~\eqref{equation-clustering-ib-average-empirical} is maximized by the Adam optimizer of~\cite{KB15}. The learning rate is initialized with 0.002 and decreased gradually every 20 epochs with a decay rate of 0.9 until it reaches a small value (0.0005 is our experiments). The reconstruction loss is calculated by using the cross-entropy criterion for MNIST and mean squared error function for the other datasets.

\subsection{Clustering Accuracy}
\label{section-acc}

We evaluate the performance of our algorithm in terms of the so-called unsupervised clustering accuracy (ACC), which is a widely used metric in the context of unsupervised learning~\cite{MGLZCL18}. For comparison purposes, we also present those of algorithms from the previous state-of-the-art.

\begin{table}[]
\centering
{\fontsize{10}{10}\selectfont
\begin{threeparttable} 
\centering
\begin{tabular}{@{}rcccccc@{}}
\toprule
& \phantom{ab} 	& \multicolumn{2}{c}{MNIST} & \phantom{ab} & \multicolumn{2}{c}{STL-10} \\ 
\cmidrule(lr){3-4} \cmidrule(lr){6-7} 
& \phantom{ab} & Best Run & Average Run & \phantom{ab} & Best Run & Average Run \\
\midrule
GMM & \phantom{ab} & 44.1 & 40.5 (1.5) & \phantom{ab} & 78.9 & 73.3 (5.1) \\
\midrule
DEC & \phantom{ab} & & & \phantom{ab} & 80.6$^\dagger$ & \\
\midrule
VaDE & \phantom{ab} & 91.8 & 78.8 (9.1) & \phantom{ab} & 85.3 & 74.1 (6.4) \\
\midrule
\textbf{VIB-GMM} & \phantom{ab} & $\mathbf{95.1}$ & $\mathbf{83.5}$ (5.9) & \phantom{ab} & $\mathbf{93.2}$ & $\mathbf{82.1}$ (5.6) \\
\bottomrule
\end{tabular}
\vspace{0.3em}
\begin{tablenotes} 
\centering
\item $^\dagger$ Values are taken from VaDE~\cite{VADE-JZTTZ17}
\vspace{0.4em}
\end{tablenotes}
\end{threeparttable} }
\caption{Comparison of the clustering accuracy of various algorithms. The algorithms are run without pretraining. Each algorithm is run ten times. The values in $(\cdot)$ correspond to the standard deviations of clustering accuracies.}
\label{table-acc-without-pretrain}
\end{table}


\begin{table}[]
\centering
{\fontsize{10}{10}\selectfont
\begin{threeparttable} 
\centering
\begin{tabular}{@{}rcccccc@{}}
\toprule
& \phantom{ab} & \multicolumn{2}{c}{MNIST} & \phantom{ab} & \multicolumn{2}{c}{REURTERS10K} \\ 
\cmidrule(lr){3-4} \cmidrule(lr){6-7} 
& \phantom{ab} & Best Run & Average Run & \phantom{ab} & Best Run & Average Run \\
\midrule
DEC & \phantom{ab} & 84.3$^\ddagger$ & & \phantom{ab} & 72.2$^\ddagger$ & \\
\midrule
VaDE & \phantom{ab} & 94.2 & 93.2 (1.5) & \phantom{ab} & 79.8 & 79.1 (0.6) \\
\midrule
\textbf{VIB-GMM} & \phantom{ab} & $\mathbf{96.1}$ & $\mathbf{95.8}$ (0.1) & \phantom{ab} & $\mathbf{81.6}$ & $\mathbf{81.2}$ (0.4) \\
\bottomrule
\end{tabular}
\vspace{0.4em}
\begin{tablenotes}
\centering
\item $^\ddagger$ Values are taken from DEC~\cite{DEC-XGF16}
\vspace{0.3em}
\end{tablenotes}
\end{threeparttable} }
\caption{Comparison of the clustering accuracy of various algorithms. A stacked autoencoder is used to pretrain the DNNs of the encoder and decoder before running algorithms (DNNs are initialized with the same weights and biases of~\cite{VADE-JZTTZ17}). Each algorithm is run ten times. The values in $(\cdot)$ correspond to the standard deviations of clustering accuracies.}
\vspace{-1em}
\label{table-acc-pretrain}
\end{table}

For each of the aforementioned datasets, we run our VIB-GMM algorithm for various values of the hyperparameter $s$ inside an interval $[s_\mathrm{min},s_\mathrm{max}]$, starting from the smaller valuer $s_\mathrm{min}$ and gradually increasing the value of $s$ every $n_\text{epoch}$ epochs. For the MNIST dataset, we set $(s_\mathrm{min}, s_\mathrm{max}, n_\text{epoch}) = (1,5,500)$; and for the STL-10 dataset and the REUTERS10K dataset, we choose these parameters to be $(1,20,500)$ and $(1,5,100)$, respectively. The obtained ACC accuracy results are reported in Table~\ref{table-acc-without-pretrain} and Table~\ref{table-acc-pretrain}. It is important to note that the reported ACC results are obtained by running each algorithm ten times. For the case in which there is no pretraining\footnote{In~\cite{VADE-JZTTZ17} and~\cite{DEC-XGF16}, the DEC and VaDE algorithms are proposed to be used with pretraining; more specifically, the DNNs are initialized with a stacked autoencoder~\cite{VLLBM10}.}, Table~\ref{table-acc-without-pretrain} states the accuracies of the best case run and average case run for the MNIST and STL-10 datasets. It is seen that our algorithm outperforms significantly the DEC algorithm of~\cite{DEC-XGF16}, as well as the VaDE algorithm of~\cite{VADE-JZTTZ17} and GMM for both the best case run and average case run. Besides, in Table~\ref{table-acc-without-pretrain}, the values in parentheses correspond to the standard deviations of clustering accuracies. As seen, the standard deviation of our algorithm VIB-GMM is lower than the VaDE; which can be expounded by the robustness of VIB-GMM to non-pretraining. For the case in which there is pretraining, Table~\ref{table-acc-pretrain} states the accuracies of the best case run and average case run for the MNIST and REUTERS10K datasets. A stacked autoencoder is used to pretrain the DNNs of the encoder and decoder before running algorithms (DNNs are initialized with the same weights and biases of~\cite{VADE-JZTTZ17}). It is seen that our algorithm outperforms significantly the DEC algorithm of~\cite{DEC-XGF16}, as well as the VaDE algorithm of~\cite{VADE-JZTTZ17} and GMM for both the best case run and average case run. The effect of pretraining can be observed comparing Table~\ref{table-acc-without-pretrain} and Table~\ref{table-acc-pretrain} for MNIST. Using a stacked autoencoder prior to running the VaDE and VIB-GMM algorithms results in a higher accuracy, as well as a lower standard deviation of accuracies; therefore, supporting the algorithms with a stacked autoencoder is beneficial for a more robust system. Finally, for the STL-10 dataset, Figure~\ref{fig-acc} depicts the evolution of the best case ACC with iterations (number of epochs) for the four compared algorithms.

Figure~\ref{fig-info-plane} shows the evolution of the reconstruction loss of our VIB-GMM algorithm for the STL-10 dataset, as a function of simultaneously varying the values of the hyperparameter $s$ and the number of epochs (recall that, as per the described methodology, we start with $s=s_\mathrm{min}$, and we increase its value gradually every $n_\text{epoch}=500$ epochs). As can be seen from the figure, the few first epochs are spent almost entirely on reducing the reconstruction loss (i.e., a fitting phase), and most of the remaining epochs are spent making the found representation more concise (i.e., smaller KL divergence). This is reminiscent of the two-phase (fitting vs. compression) that was observed for supervised learning using VIB in~\cite{blackbox}. 

\begin{figure}[H]
\centering
\begin{minipage}{0.48\textwidth}
\centering
\includegraphics[width=\linewidth]{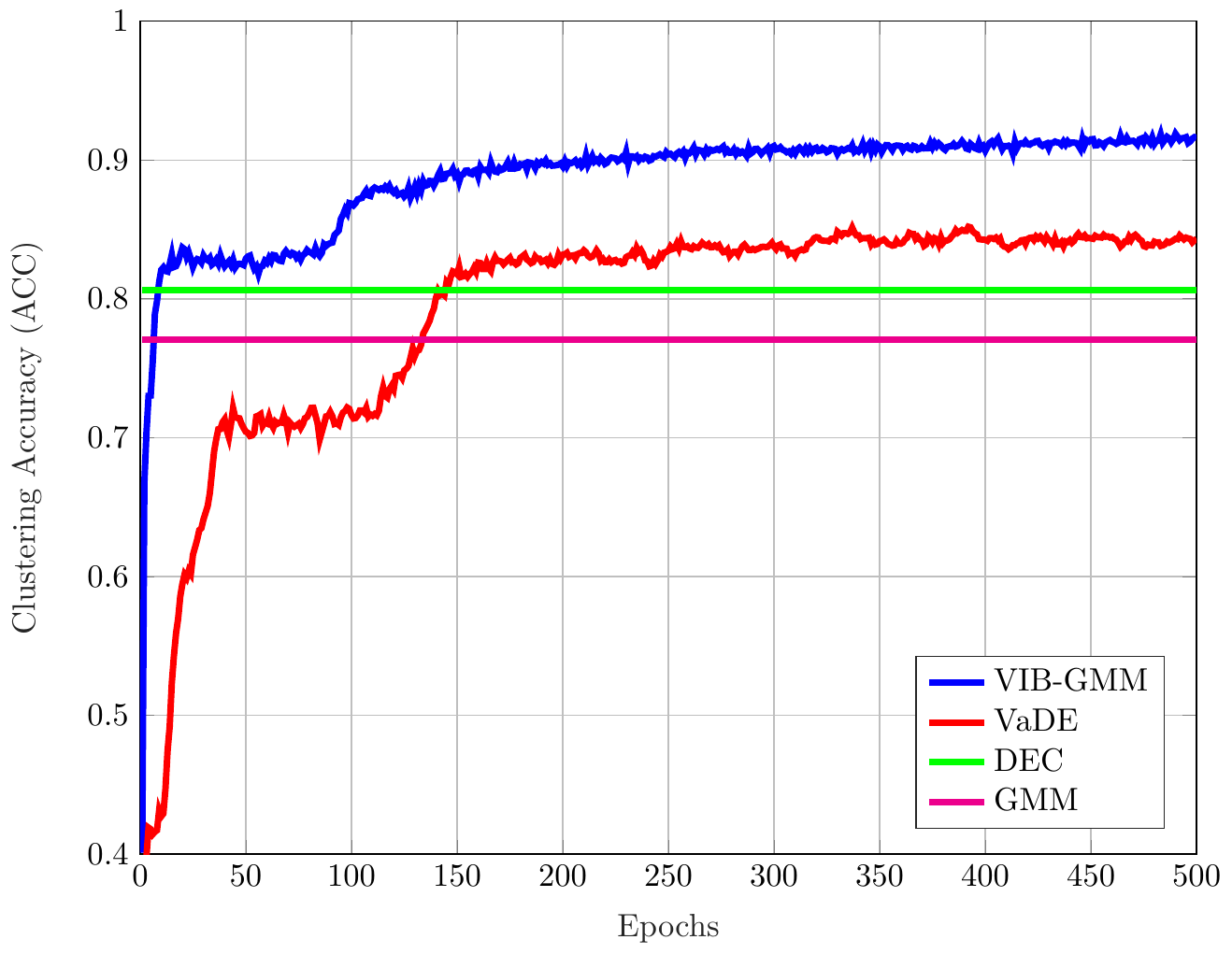}
\caption{Accuracy vs. epochs for the STL-10 dataset.}
\label{fig-acc}
\end{minipage} 
\hfill
\begin{minipage}{0.48\textwidth}
\centering
\includegraphics[width=\linewidth]{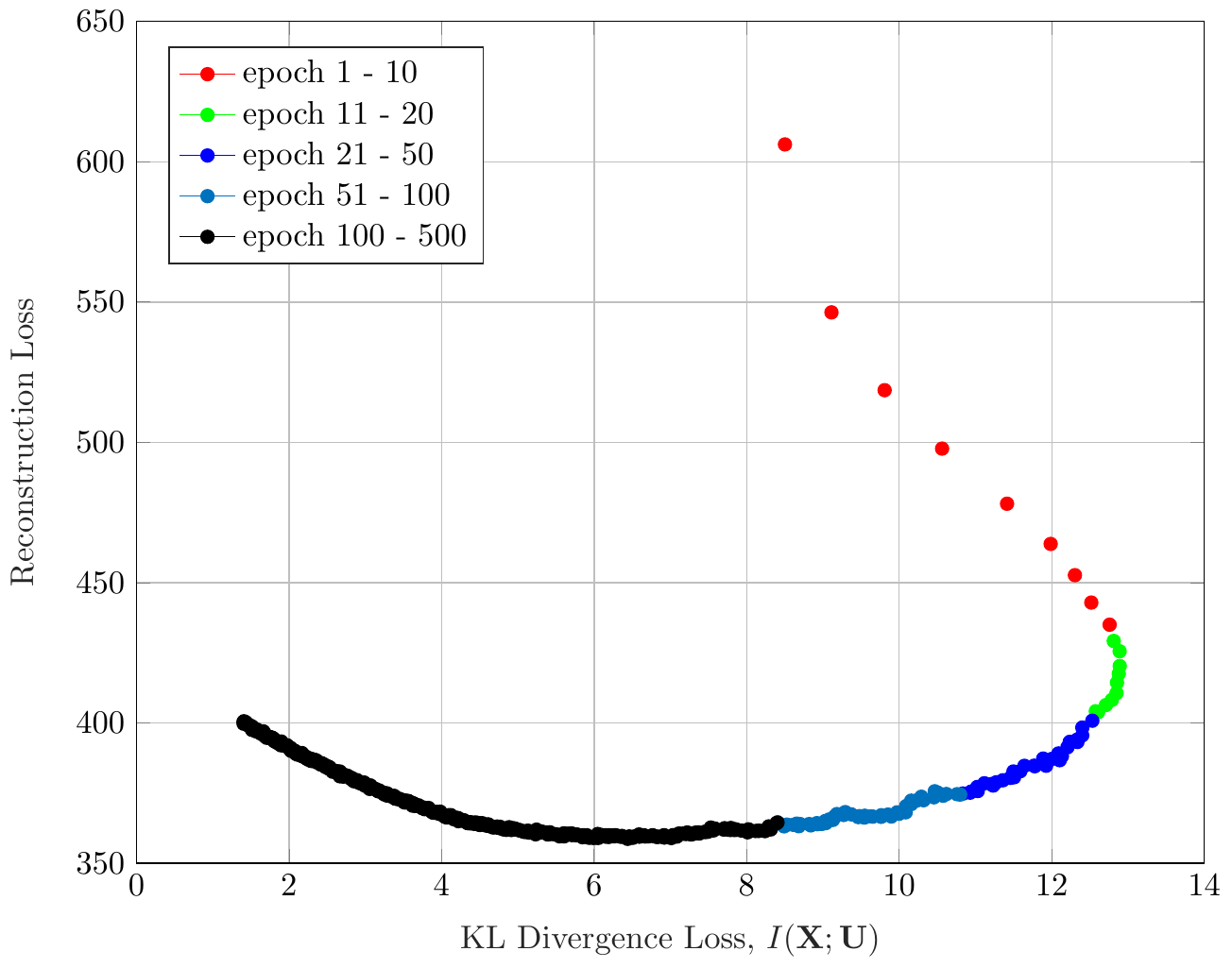}
\caption{Information plane for the STL-10 dataset.}
\label{fig-info-plane}
\end{minipage} 
\end{figure}

\begin{remark}
For a fair comparison, our algorithm VIB-GMM and the VaDE of~\cite{VADE-JZTTZ17} are run for the same number of epochs, e.g., $n_\text{epoch}$. In the VaDE algorithm, the cost function~\eqref{eq:VaDE_general} is optimized for a particular value of hyperparameter $s$. Instead of running $n_\text{epoch}$ epochs for $s=1$ as in VaDE, we run $n_\text{epoch}$ epochs by gradually increasing $s$ to optimize the cost~\eqref{equation-clustering-ib-emprical-cost}. In other words, the computational resources are distributed over a range of $s$ values. Therefore, the computational complexity of our algorithm and the VaDE are equivalent.
\qedblack
\end{remark}

\begin{figure}[]
\subfloat[Initial accuracy = $\% 10$]{
\begin{minipage}[c][0.75\width]{0.5\textwidth}
\centering
\includegraphics[width=0.8\textwidth]{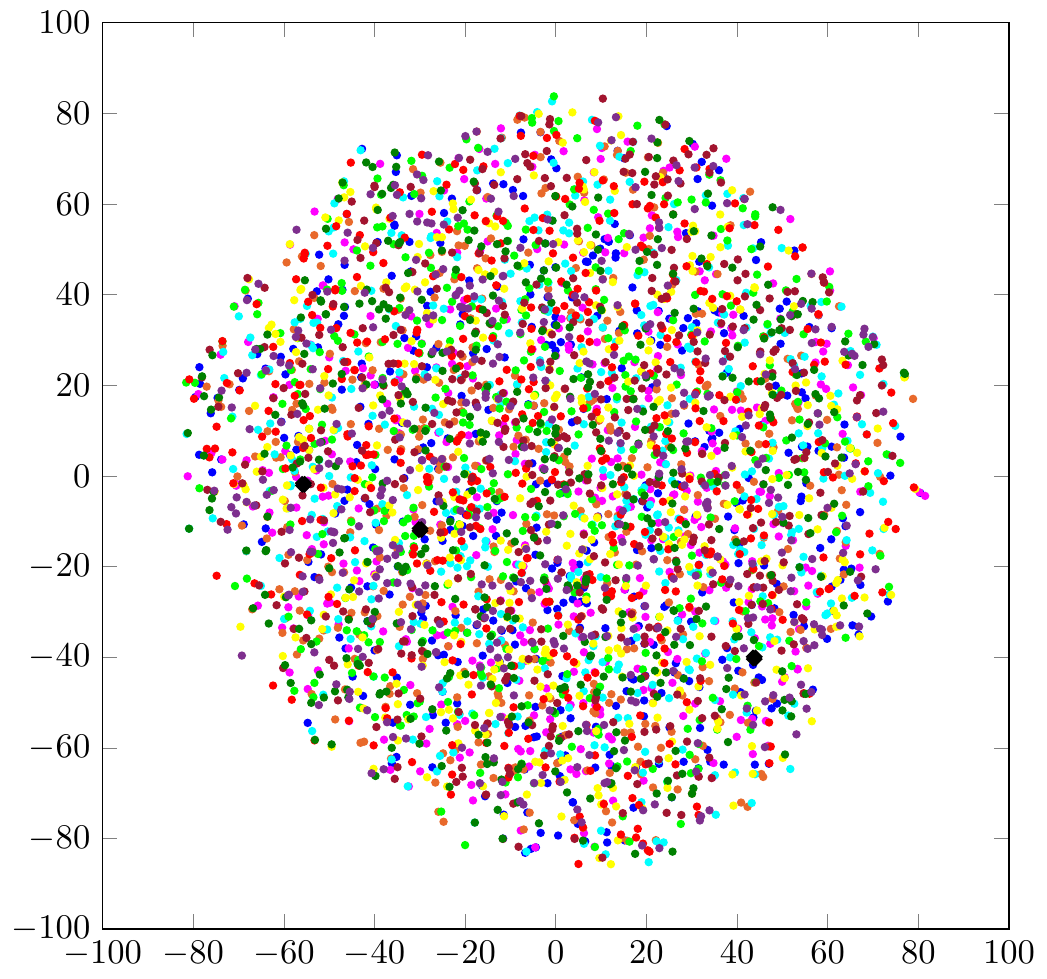}
\label{fig-tsne-a}
\end{minipage}}
\hfill 	
\subfloat[First epoch, accuracy = $\% 41$]{
\begin{minipage}[c][0.75\width]{0.5\textwidth}
\centering
\includegraphics[width=0.8\textwidth]{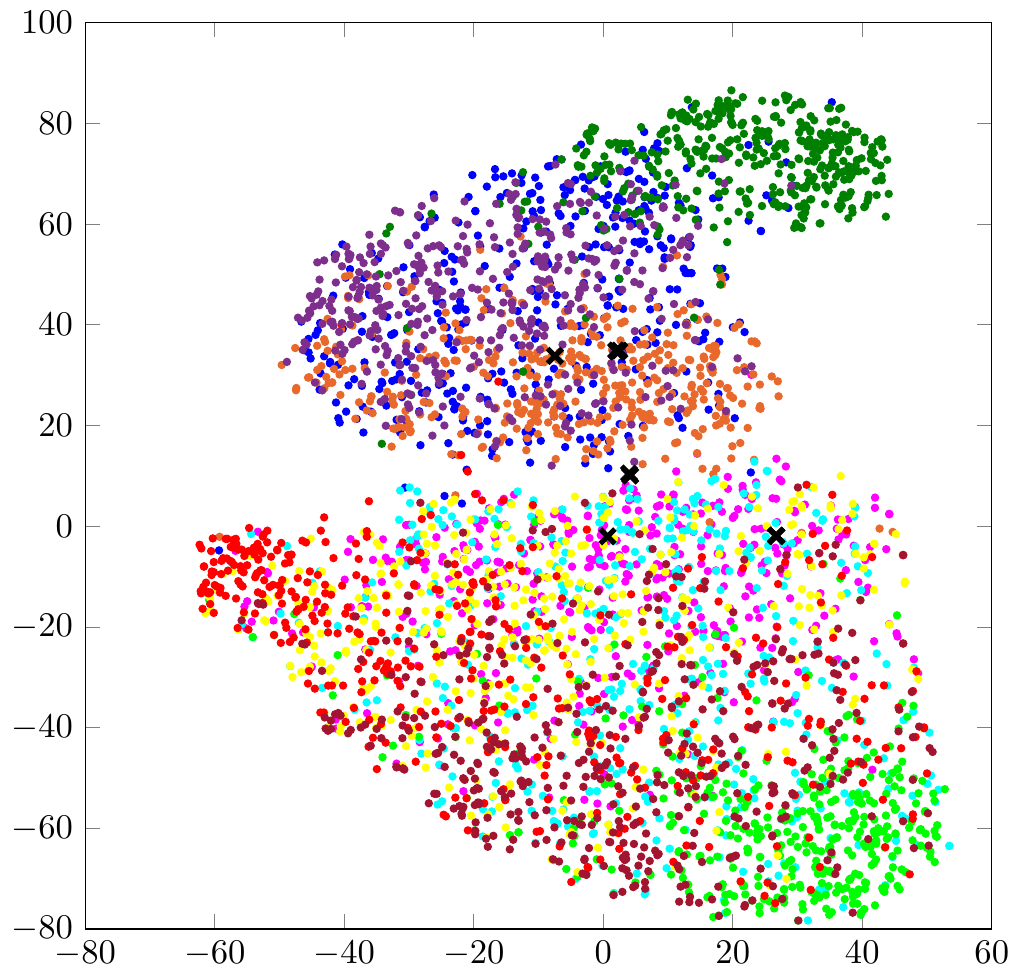}
\label{fig-tsne-b}
\end{minipage}}
\hfill	
\subfloat[Fifth epoch, accuracy = $\% 66$]{
\begin{minipage}[c][0.75\width]{0.5\textwidth}
\centering
\includegraphics[width=0.8\textwidth]{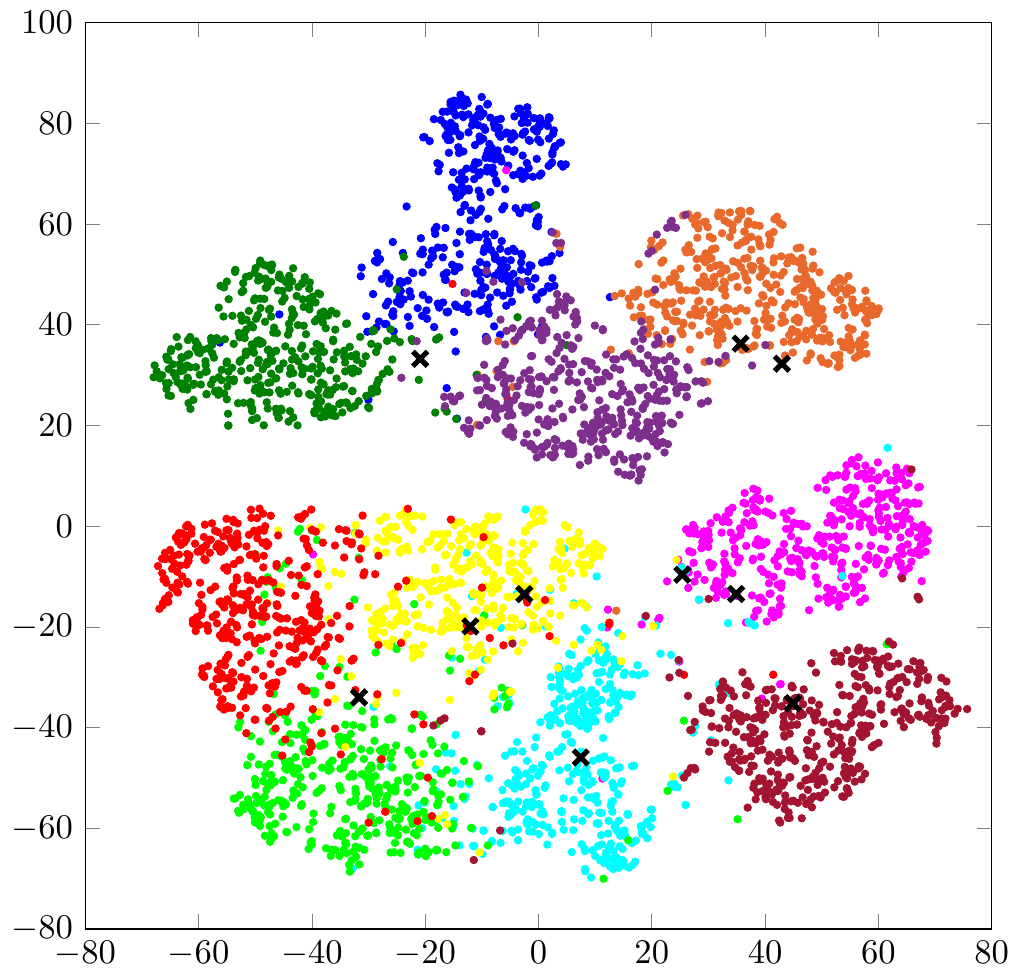}
\label{fig-tsne-c}
\end{minipage}}
\hfill	
\subfloat[Final, accuracy = $\% 91.6$]{
\begin{minipage}[c][0.75\width]{0.5\textwidth}
\centering
\includegraphics[width=0.8\textwidth]{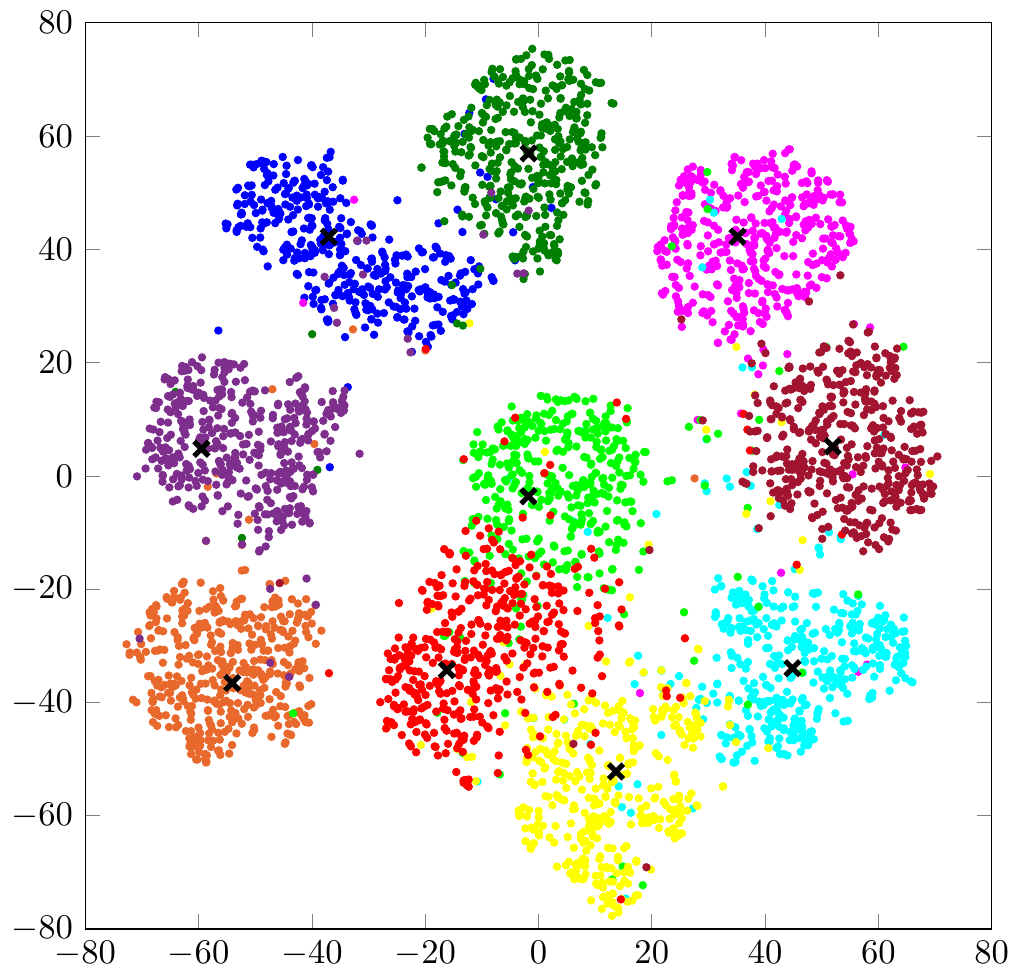}
\label{fig-tsne-d}
\end{minipage}}
\vspace{0.5em}
\caption{Visualization of the latent space before training; and after 1, 5, and 500 epochs.}
\label{fig-tsne}
\end{figure}

\subsection{Visualization on the Latent Space}

In this section, we investigate the evolution of the unsupervised clustering of the STL-10 dataset on the latent space using our VIB-GMM algorithm. For this purpose, we find it convenient to visualize the latent space through application of the t-SNE algorithm of~\cite{TSNE-MH08} in order to generate meaningful representations in a two-dimensional space. Figure~\ref{fig-tsne} shows $4000$ randomly chosen latent representations before the start of the training process and respectively after $1$, $5$, and $500$ epochs. The shown points (with a $\boldsymbol\cdot$ marker in the figure) represent latent representations of data samples whose labels are identical. Colors are used to distinguish between clusters. Crosses (with an $\dv x$ marker in the figure) correspond to the centroids of the clusters. More specifically, Figure~\ref{fig-tsne-a} shows the initial latent space before the training process. If the clustering is performed on the initial representations, it allows ACC as small as $10\%$, i.e., as bad as a random assignment. Figure~\ref{fig-tsne-b} shows the latent space after one epoch, from which a partition of some of the points starts to be already visible. With five epochs, that partitioning is significantly sharper, and the associated clusters can be recognized easily. Observe, however, that the cluster centers seem not to have converged yet. With $500$ epochs, the ACC accuracy of our algorithm reaches $\%91.6$, and the clusters and their centroids are neater, as is visible from Figure~\ref{fig-tsne-d}.

\section{Conclusions and Future Work}

In this paper, we propose and analyze the performance of an unsupervised algorithm for data clustering. The algorithm uses the Variational Information Bottleneck approach and models the latent space as a mixture of Gaussians. It is shown to outperform state-of-the-art algorithms such as the VaDE of~\cite{VADE-JZTTZ17} and the DEC of~\cite{DEC-XGF16}. We note that although it is here assumed that the number of classes is known beforehand (as is the case for almost all competing algorithms in its category), that number can be found (or estimated to within accuracy) through inspection of the resulting bifurcations on the associated \textit{information-plane}, as was observed for the standard Information Bottleneck method. Finally, we mention that among interesting research directions on this line of work, one important question pertains to the distributed learning setting, i.e., along the counterpart, to the unsupervised setting, of the recent work~\cite{DVIB, dib-inaki, entropy-zaidi}, which contains distributed IB algorithms for both discrete and vector Gaussian data models.

\appendices

\section{The Proof of Lemma~\ref{lemma-clustering-variational-bound}}
\label{appendix-proof-lemma-variational-bound}

First, we expand $\mc L^\prime_s(\dv P)$ as follows
\begin{align*}
\mc L^\prime_s(\dv P)
=& - H(\dv X| \dv U) - s I(\dv X; \dv U) 
= - H(\dv X| \dv U) - s [ H(\dv U) - H(\dv U| \dv X) ] \\
=& \iint_{\dv u \dv x} p(\dv u, \dv x) \log p(\dv x|\dv u) \,d\dv u \,d\dv x 
+ s \int_{\dv u} p(\dv u) \log p(\dv u) \,d\dv u 
- s \iint_{\dv u \dv x} p(\dv u, \dv x) \log p(\dv u|\dv x) \,d\dv u \,d\dv x \;.
\end{align*}
Then, $\mc L_s^\mathrm{VB}(\dv P, \dv Q)$ is defined as follows
\begin{align}
\mc L_s^\mathrm{VB}(\dv P, \dv Q)
:=& \iint_{\dv u \dv x} p(\dv u, \dv x) \log q(\dv x|\dv u) \,d\dv u \,d\dv x 
 + s \int_{\dv u} p(\dv u) \log q(\dv u) \,d\dv u 
- s \iint_{\dv u \dv x} p(\dv u, \dv x) \log p(\dv u|\dv x) \,d\dv u \,d\dv x \;.
\label{eq:variational-bound-ver2}
\end{align}
Hence, we have the following relation
\begin{align*}
\mc L^\prime_s(\dv P) - \mc L_s^\mathrm{VB}(\dv P, \dv Q)
=\E_{P_{\dv U}} [ D_\mathrm{KL}(P_{\dv X| \dv U} \| Q_{\dv X| \dv U}) ] 
+ s D_\mathrm{KL}(P_{\dv U} \| Q_{\dv U}) \geq 0 \;,
\end{align*}
where equality holds under equalities $Q_{\dv X|\dv U} = P_{\dv X|\dv U}$ and $Q_{\dv U} = P_{\dv U}$. We note that $s \geq 0$.

\noindent 
Now, we complete the proof by showing that~\eqref{eq:variational-bound-ver2} is equal to~\eqref{eq:variational-bound}. To do so, we proceed~\eqref{eq:variational-bound-ver2} as follows
\begin{align*}
\mc L_s^\mathrm{VB}(\dv P, \dv Q)
=& \int_{\dv x} p(\dv x) \int_{\dv u} p(\dv u| \dv x) \log q(\dv x|\dv u) \,d\dv u \,d\dv x \\ 
& + s \int_{\dv x} p(\dv x) \int_{\dv u} p(\dv u| \dv x) \log q(\dv u) \,d\dv u 
- s \int_{\dv x} p(\dv x) \int_{\dv u} p(\dv u| \dv x) \log p(\dv u|\dv x) \,d\dv u \,d\dv x \\[0.4em]
=& \E_{P_{\dv X}} \Big[ 
\E_{P_{\dv U| \dv X}}[ \log Q_{\dv X| \dv U}] 
- s D_\mathrm{KL}(P_{\dv U|\dv X} \| Q_{\dv U}) 
\Big] \;.
\end{align*}

\section{Alternative Expression $\mc L_s^\mathrm{VaDE}$}
\label{appendix-vade}

Here, we show that~\eqref{eq-equivalence-vade} is equal to~\eqref{eq-equivalence-alternative}.


\noindent
To do so, we start with~\eqref{eq-equivalence-alternative} and proceed as follows
\begin{align*}
\mc L_s^\mathrm{VaDE} &= 
\E_{P_{\dv X}} \Big[ 
\E_{P_{\dv U| \dv X}}[ \log Q_{\dv X| \dv U}]
- s D_\mathrm{KL}(P_{\dv U|\dv X} \| Q_{\dv U}) 
- s \E_{P_{\dv U| \dv X}}\big[ D_\mathrm{KL}(P_{C| \dv X} \| Q_{C| \dv U})
\Big] \\[0.2em]
&= \E_{P_X} \big[ \E_{P_{\dv U| \dv X}}[ \log Q_{\dv X| \dv U}] \big]
- s \int_{\dv x} p(\dv x) \int_{\dv u} p(\dv u| \dv x) \log \frac{p(\dv u| \dv x)}{q(\dv u)} \,d\dv u \,d\dv x \\
& \quad\quad\: - s \int_{\dv x} p(\dv x) \int_{\dv u} p(\dv u| \dv x) \sum_{c} p(c| \dv x) \log \frac{p(c| \dv x)}{q(c| \dv u)} \,d\dv u \,d\dv x \\[0.2em]
&\stackrel{(a)}{=} \E_{P_X} \big[ \E_{P_{\dv U| \dv X}}[ \log Q_{\dv X| \dv U}] \big]
- s \iint_{\dv u \dv x} p(\dv x) p(\dv u| \dv x) \log \frac{p(\dv u| \dv x)}{q(\dv u)} \,d\dv u \,d\dv x \\
& \quad\quad\: - s \iint_{\dv u \dv x} \sum_{c} p(\dv x) p(\dv u| c, \dv x) p(c| \dv x) \log \frac{p(c| \dv x)}{q(c| \dv u)} \,d\dv u \,d\dv x \\[0.2em]
&= \E_{P_X} \big[ \E_{P_{\dv U| \dv X}}[ \log Q_{\dv X| \dv U}] \big]
- s \iint_{\dv u \dv x} \sum_{c} p(\dv u, c, \dv x) \log \frac{p(\dv u| \dv x) p(c| \dv x)}{q(\dv u) q(c| \dv u)} \,d\dv u \,d\dv x \\[0.2em] 
&= \E_{P_X} \big[ \E_{P_{\dv U| \dv X}}[ \log Q_{\dv X| \dv U}] \big]
- s \iint_{\dv u \dv x} \sum_{c} p(\dv u, c, \dv x) \log \frac{p(c| \dv x)}{q(c)} \frac{p(\dv u| \dv x)}{q(\dv u| c)} \,d\dv u \,d\dv x \\[0.2em]
&= \E_{P_X} \big[ \E_{P_{\dv U| \dv X}}[ \log Q_{\dv X| \dv U}] \big]
- s \int_{\dv x} \sum_{c} p(c, \dv x) \log \frac{p(c| \dv x)}{q(c)} \,d\dv x \\
& \quad\quad\: - s \iint_{\dv u \dv x} \sum_{c} p(\dv x) p(c| \dv x) p(\dv u| c, \dv x) \log \frac{p(\dv u| \dv x)}{q(\dv u| c)} \,d\dv u \,d\dv x \\[0.2em]
&\stackrel{(b)}{=} \E_{P_{\dv X}} \Big[ 
\E_{P_{\dv U| \dv X}}[ \log Q_{\dv X| \dv U}]
- s D_\mathrm{KL}(P_{C|\dv X} \| Q_{C})
- s \E_{P_{C| \dv X}}[ D_\mathrm{KL}(P_{\dv U| \dv X} \| Q_{\dv U| C}) ]
\Big] \;,
\end{align*}
where $(a)$ and $(b)$ follow due to the Markov chain $C \mkv \dv X \mkv \dv U$.

\vspace{-0.25em}
\bibliographystyle{IEEEtran}
\bibliography{IEEEabrv,arxiv_clustering_bibfile}
\end{document}